\definecolor{tb_bg_color}{rgb}{0.835, 0.902, 0.941}
\newtheorem{theorem}{Theorem}
\def\BibTeX{{\rm B\kern-.05em{\sc i\kern-.025em b}\kern-.08em
    T\kern-.1667em\lower.7ex\hbox{E}\kern-.125emX}}
\newcommand{\linebreakand}{%
  \end{@IEEEauthorhalign}
  \hfill\mbox{}\par
  \mbox{}\hfill\begin{@IEEEauthorhalign}
}
\begin{document}

\title{WaveNets: Wavelet Channel Attention Networks\\
}

\author{\IEEEauthorblockN{1\textsuperscript{st} Hadi Salman}
\IEEEauthorblockA{\textit{Department of Computer Science } \\
\textit{ University of Arkansas}\\
Fayetteville, Arkansas, USA \\
hs028@uark.edu}
\and
\IEEEauthorblockN{2\textsuperscript{nd} Caleb Parks}
\IEEEauthorblockA{\textit{Department of Computer Science} \\
\textit{ University of Arkansas}\\
Fayetteville, Arkansas, USA \\
cgparks@uark.edu}
\and
\IEEEauthorblockN{3\textsuperscript{rd} Shi Yin Hong}
\IEEEauthorblockA{\textit{Department of Computer Science} \\
\textit{ University of Arkansas}\\
Fayetteville, Arkansas, USA \\
syhong@uark.edu}
\and
\linebreakand 
\IEEEauthorblockN{4\textsuperscript{th} Justin Zhan}
\IEEEauthorblockA{\textit{Department of Computer Science} \\
\textit{University of Cincinnati}\\
Cincinnati, Ohio, USA \\
zhanjt@ucmail.uc.edu}
}
\IEEEoverridecommandlockouts

\IEEEpubid{\makebox[\columnwidth]{978-1-5386-5541-2/18/\$31.00~\copyright2022 IEEE \hfill} \hspace{\columnsep}\makebox[\columnwidth]{ }}
\maketitle

\begin{abstract}
Channel Attention reigns supreme as an effective technique in the field of computer vision. However, the proposed channel attention by SENet suffers from information loss in feature learning caused by the use of Global Average Pooling (GAP) to represent channels as scalars. Thus, designing effective channel attention mechanisms requires finding a solution to enhance features preservation in modeling channel inter-dependencies. In this work, we utilize Wavelet transform compression as a solution to the channel representation problem. We first test wavelet transform as a standalone channel compression method. We prove that global average pooling is equivalent to the recursive approximate Haar wavelet transform. With this proof, we generalize channel attention using Wavelet compression and name it WaveNet. Implementation of our method can be embedded within existing channel attention methods with a couple of lines of code. We test our proposed method using ImageNet dataset for image classification task. Our method outperforms the baseline SENet-34, and SOTA FcaNet-34.

\end{abstract}

\begin{IEEEkeywords}
Channel Attention, ImageNet, Wavelet Transform
\end{IEEEkeywords}

\section{Introduction}
\label{sec_intro}
\color{black}
In deep convolutional neural networks (CNNs), effective feature learning often relies upon the success of attention mechanisms in selectively capturing and preserving relevant important details from input \cite{NIU202148}. In tasks such as image classification, attention mechanisms involve redistributing the weights of input feature maps to achieve better classification accuracy \cite{9384588,SMMUF}. Major attention mechanisms used in CNNs consist of channel attention, spatial attention, branch attention, and temporal attention. \cite{arXiv-survey,SSLFFMVDC}. Particularly, the computer vision domain conventionally adopts the channel attention (CA). Introduced by the SeNet \cite{hu2018squeeze}, CA offers a relatively computationally efficient selection of important channels by generating scalar channel weights, whereby channel-wise computations are performed on features derived from the global average pooling (GAP).

While channel attention is an intuitive technique in capturing salient properties of images, recent studies suggest that CA’s use of global average pooling (GAP) in its architecture hinders its performance. GAP is insufficient in retaining sophisticated details and fails to comply with some task-specific model practices \cite{9157483}. Moreover, GAP's straightforward dimensionality reduction further limits CA’s inter-channel dependencies modeling \cite{9156697}. Our motivation to design WaveNet stems from this need to reassess CA to capture finer details in feature learning. This reassessment should allow CA to redistribute the weights of input feature maps to improve classification accuracy while maintaining CA's computational efficiency.

To address the above limitations, we propose to enhance the feature preservation during downsampling via the discrete wavelet transform (DWT). As a tool in digital signal processing, DWT has various image processing applications in tasks such as image compression, dehazing, classification, denoising, restoration, and watermarking \cite{8575265} \cite{9508165} \cite{9523134}  \cite{Liu_2018_CVPR_Workshops} \cite{9232700}. Essentially, DWT performs pyramidal image decomposition by transforming an image into four sub-bands composed of a lowpass (LL) filter and a bandpass filter with horizontal (LH), vertical (HL), and diagonal (HH) decomposition of the image, respectively \cite{shensa1992discrete} \cite{Othman_Zeebaree_2020}. The LL filter corresponds to a downsized version of the original image with lower resolution, and the LH, HL, and HH bandpass filters highlights the input's predominant traits in their associated orientation. The ability of DWT to perform multilevel decomposition on images inspired WaveNet, in which we explore the levels of decomposition of DWT's application in CA. 

In this work, we introduce a novel channel attention framework that stems from a mathematical compression technique. In an effort to better represent channel information and express what GAP failed to explore, we propose to utilize Haar DWT for the channel attention mechanism. 
Along with the Haar channel attention framework, we propose a customized wavelet channel attention framework. In this framework, we use a set of random orthogonal filters to be used in a customized wavelet. The role of those orthogonal filters is to enforce feature preservation and diversity in the compression task prior to excitation of channel attention.

Our implementation of this enhanced channel attention mechanism achieves the state-of-the-art performance against related channel attention techniques. The main contribution of this work are summarized as follows:
\begin{itemize}
	\item We view the channel attention from a compression perspective and adopt DWT in the vanilla channel attention for channel information preservation. With the proof, we establish that conventional GAP is the recurrent Approximate Component of Discrete Haar Wavelet Transform. Then, we generalize the channel attention from the frequency basis and propose our method, termed as WaveNet.
	
	\item Motivated by the success of the Discrete Haar Wavelet Transform in WaveNet, we propose WaveNet-C, a custom orthogonal linearly independent filters wavelet to enforce diversity in the compression task for Channel Attention.  
	
	\item We propose a filter selection criteria along with a parameter reduction technique to fulfill WaveNet-C.
	
	\item We conduct extensive experimental results which support that the presented method achieves the state-of-the-art results on ImageNet comparable computational cost to SENet. 
\end{itemize}

\section{Related Work} 

\color{black}
\paragraph{Visual Attention in CNNs}The active field of research in attention mechanisms has varieties of vision applications across various domains \cite{TIAN2020117} \cite{8576656} \cite{Li_2019_CVPR} \cite{Wu_Huang_Guo_Wang_2020} \cite{9412543}  \cite{9411967} \cite{9619948} \cite{9413102} \cite{9350209} \cite{9018370} \cite{9366353} \cite{9412042}. Early interest in visual attention is fostered by the highway network \cite{srivastava2015highway}, which introduced a gating mechanism that enhances the flow of feature information in a deep neural network. ResNet’s \cite{he2016deep} success with deep CNNs via the use of skips connections in residual blocks further set the foundation for using attention in creating the next state-of-art model. Soon, the proposal of SENet \cite{hu2018squeeze} presents the channel attention in an efficient squeeze and excitation architecture, fueling a wave of studies aiming to improve the channel attention performance. Notably, DANet \cite{fu2019dual} integrates a position attention module with channel attention to model long-range contextual dependencies. Building upon NLNet \cite{wang2018non} and SENet, GCNet \cite{Cao_2019_ICCV} proposes GC blocks to capture channel-wise interdependencies while emphasizing long-range global context modeling. \cite{Peng_2020_CVPR_Workshops} introduces the triplet attention, modeling spatial attention and channel attention with efficient parameters and no dimensionality reduction. HA-CNN \cite{li2018harmonious} assesses joint attention selection, which combines hard regional and soft spatial attention with channel attention. Besides utilizing spatial attention, CBAM \cite{woo2018cbam} applies global max-pooling in channel attention to counter GAP's limits. ECA-Net \cite{wang2020eca} remodels the channel attention architecture to capture cross-channel interaction without unnecessary dimensionality reduction. TSE \cite{Vosco_2021_ICCV} disregards GAP’s global spatial context in SENet to streamline the SE network usage with AI accelerators. FcaNet \cite{Qin_2021_ICCV} adds a multi-spectral component to channel attention from a frequency analysis perspective that explains the relationship between GAP and discrete cosine transform.

\paragraph{Wavelet Transforms in Image Processing} 
Wavelet transforms attract growing interest in deep learning-based image processing applications. Early associated works tend to neglect the use of attention mechanisms and range from image super-resolution \cite{8600724,GHMWTDISR}, classification \cite{9156335} \cite{10.1117/12.2556535}, inpainting \cite{yu2021wavefill}, demoiréing \cite{liu2020wavelet}, and restoration \cite{Liu_2018_CVPR_Workshops}. Recently, some works propose to integrate attention mechanisms and Wavelet transforms. AWNet \cite{Dai2020AWNetAW} integrates non-local attention with DWT to achieve better results for image signal processor (ISP) with smartphone images. \cite{9412623} proposes WAEN, which composes of an attention embedding network and a wavelet embedding network to enhance video super-resolution. The soft attention-based model proposed in \cite{9484398} applies DWT to improve face recognition of morphed images. \cite{8795569} presents a framework for detecting surface defects of glass bottles that fuses the Wavelet transform into their visual attention model. \cite{9190720} details a single image deraining framework based on a fusion network with DWT and its inverse into its attention module. Different than most previous works that use wavelet transform with an attention mechanism for a specific domain-based application, our WaveNet seeks to incorporate Wavelet transform into the underlying architecture of CA to improve the attention mechanism at its most fundamental level.

\newcommand*{\horzbar}{\rule[.5ex]{2.5ex}{0.5pt}}
\newcommand*{\vertbar}{\rule[-1ex]{0.5pt}{2.5ex}}

\section{Method}
\color{black}
We start this section by formulating the Discrete Wavelet Transform (DWT) and Channel Attention (CA). We then look into more details over the derivation of our Interdependent channel attention. Together with the interdependent channel attention model, we explore a diversification strategy for custom wavelet transform.

\subsection{Discrete Wavelet Transform (DWT) and Channel Attention (CA)}
In this section, we first go in-depth over the mathematical derivation of DWT. Then, we elaborate on explaining the channel attention mechanism.

\label{sec_DWTCA}

\paragraph{DWT using Multiplication}

Given scale weights $H$ and shift weights $G$ describing wavelet $w$, the wavelet transform of (1D) input $X$ is:

\begin{equation}
    X_{J=1}^{output} = 
           W \times X = \begin{bmatrix}
           H  \\
           \horzbar \\
           G 
         \end{bmatrix} \times X.
    \label{1d-wav}
\end{equation}

Where \begin{equation}
    W = \begin{bmatrix}
h       &0 &\ldots  &0\\
0 & h      &\ddots  &0\\
\vdots  &\ddots  &h       &0\\
0 &\ldots  &0 &h\\
g       &0 &\ldots  &0\\
0 & g      &\ddots  &0\\
\vdots  &\ddots  &g       &0\\
0 &\ldots  &0 &g
\end{bmatrix}_{n \times n}
    \label{1d-wav1}
\end{equation}

The (2D) DWT can be described using the (1D) DWT by applying the procedure to columns first then repeating the process to the rows of the output. The first level DWT for input X can be modeled as follows:

\begin{equation}
\begin{aligned}
    X_{J=1}^{output} &= DWT(X) =  W \times X \times W^T \\&= \begin{bmatrix} H  \\\horzbar \\G\end{bmatrix} X \begin{bmatrix} H^T  &\vertbar &G^T\end{bmatrix}
    \end{aligned}
    \label{Wave_mult}
\end{equation}

\begin{equation}
    X_{output} =\begin{bmatrix}\mathcal{A} & \mathcal{V}\\\mathcal{H} & \mathcal{D} \end{bmatrix} =\begin{bmatrix} HXH^T & \vertbar & HXG^T \\ \horzbar&&\horzbar\\GXH^T & \vertbar & GXG^T \end{bmatrix}
\end{equation}

Where $\mathcal{A}$ is the Approximation of $X$, $\mathcal{V}$ is the Vertical difference of $X$, $\mathcal{H}$ is the Horizontal difference of $X$, and $\mathcal{D}$ is the Diagonal difference of $X$. For an image with the size of ($n \times n$), the extracted features size is ($n/2 \times n/2$). The wavelet transform level is the number of the times of wavelet feature extraction. At $J^{th}$ level, the extracted feature size is ($n/2^{J} \times n/2^{J}$). 

\paragraph{DWT using Convolution}
\label{DWT_CONV}
Given decoding high pass and low pass filters $H, L$ respectively, we use convolution with correlations of the form :
\begin{equation}
\centering
\label{DWT_CONV_EQ}
Y_{k,l} = \sum \psi_{ij} X_{i+k,j+l}. 
\end{equation}
We assemble the encoding filters by stacking the Low-Low, horizontal, vertical, diagonal filters. For a $d$ value filter where $d \in \{2,3,4,5\}$, the filter bank per channel has the dimensions of size ($4,d,d$). Haar has $d = 2$ value filter.   
For input of size ($N \times C \times H \times W$) the filter bank size for convolution is ($4 \times C, C, d, d$) with 2 as the stride and no padding. The convolution output is of size ($N,C,4,H/2,W/2$) where $\mathcal{A} , \mathcal{V}, \mathcal{H} , \mathcal{D}$ are stacked on $3^{rd}$ dimension.

\paragraph{Channel Attention}

Convolution Neural Networks rely heavily on channel attention mechanisms. The idea is to re-calibrate the channel weights based on relative importance to the general task. Suppose that $X \in \mathbb{R}^{C \times H \times W}$ is an instance of a deep image feature, $C$ is the channel count, $H$ is the feature height, and $W$ is the feature width. As discussed in Sec. \ref{sec_intro}, the channel attention process aims to summarize the channel content into a scalar value. Hence the channel attention mechanism described initially by SENet \cite{} can be written as:
\begin{equation}
\begin{aligned}
    att &= excite(squeeze(X))) \\&= sigmoid(fc(GAP(X)))
\end{aligned}
\label{eq_ca}
\end{equation}
where $att \in \mathbb{R} ^ {C}$ is attention vector, $sigmoid$ is Sigmoid function, $fc$ is a mapping function such as a fully connected layer or an one-dimensional convolution, and squeeze (GAP): $\mathbb{R}^{C \times H \times W} \mapsto \mathbb{R}^{C\times 1 \times 1} $ is a compression method. After acquiring the attention vector of all $C$ channels, all channels of input $X$ are scaled by their corresponding importance value:

\begin{equation}
    \widetilde{X}_{N,C,W,H} = att_{N,C,1,1} \cdot  X_{N,C,W,H}
\end{equation}
where $att$, $X$, and $\widetilde{X}$ are the input, attention vector and attention mechanism output.

Typically, global average pooling is used as the compression method \cite{hu2018squeeze,wang2020eca}. Other popular compression methods are global max pooling~\cite{woo2018cbam} and global standard deviation pooling~\cite{lee2019srm}. 



\subsection{Interdependent Channel Attention}
\label{sec_ica}
In this section, we start by highlighting weaknesses of the current channel attention mechanisms. Based on the theoretical analysis, we then discuss our proposed design to overcome those weaknesses.

\paragraph{Wavelet Channel Attention}

As discussed in Sec.~\ref{sec_DWTCA}, DWT extracts four main features of an image. With the proof, We demonstrate that GAP is equivalent to the recurrent approximation of the input image when Haar wavelet transform is used. 
\begin{figure*}[t]
    \centering
    \subfigure[Original SENet]{\includegraphics[width=0.93\textwidth]{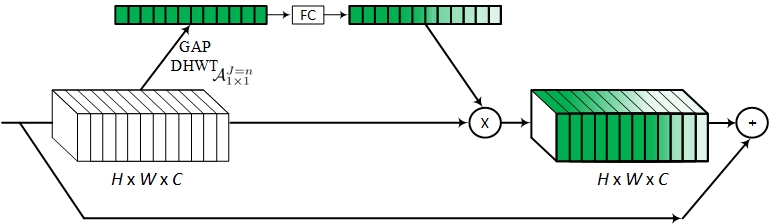}}
    \subfigure[WaveNet-C Orthogonal Interdependent Channel Attention]{\includegraphics[width=0.93\textwidth]{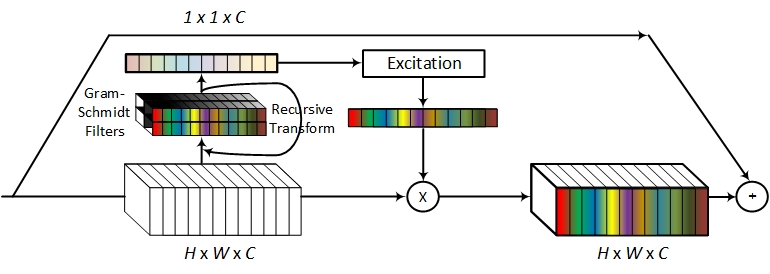}}
    \caption{Illustration of existing channel attention and Orthogonal Interdependent channel attention. The 2D DWT are initialized randomly then Orthogonalized using Gram-Schmidt process. We can see that our method uses a variety of filters, while SENet only uses GAP in channel attention. Best viewed in color.}
    \label{fig_main}
\end{figure*}

\begin{theorem}
    For an image X with the size of $H \times W$, GAP is an exceptional case of 2D DWT with result proportional to the $\log_2{(\max{(H,W)})}$ level approximation using 2D Discrete Haar Wavelet Transform (DHWT). 
    \label{theo}
\end{theorem}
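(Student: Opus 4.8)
The plan is to work with the matrix form of the transform in \eqref{Wave_mult} and track exactly what the approximation block $\mathcal{A} = HXH^T$ does to a single scalar after repeated application. First I would instantiate the Haar filter explicitly: the orthonormal Haar low-pass weights are $h = \tfrac{1}{\sqrt{2}}[1,\,1]$, so the low-pass matrix $H$ sends an $n$-vector to an $(n/2)$-vector by summing adjacent pairs and scaling by $\tfrac{1}{\sqrt{2}}$. Feeding this into $\mathcal{A} = HXH^T$ then shows that a single level of the Haar approximation maps each disjoint $2\times 2$ block of $X$ to its sum scaled by $\tfrac12$; that is, $\mathcal{A}_{ij} = \tfrac12\bigl(X_{2i-1,2j-1}+X_{2i-1,2j}+X_{2i,2j-1}+X_{2i,2j}\bigr)$, since each of the two $\tfrac{1}{\sqrt{2}}$ factors (one from $H$, one from $H^T$) contributes to the overall $\tfrac12$.

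The core of the argument is an induction on the decomposition level. Assuming for clarity a square dyadic image with $H = W = 2^J$, I would set $\mathcal{A}^{(0)} = X$ and $\mathcal{A}^{(k)} = H_k\,\mathcal{A}^{(k-1)}\,H_k^{T}$, then prove that each entry of $\mathcal{A}^{(k)}$ equals $2^{-k}$ times the sum of the $2^k\times 2^k$ sub-block of $X$ it covers. The base case is the one-level identity above; the inductive step follows because applying one further approximation sums four neighboring partial sums and multiplies by another factor of $\tfrac12$, merging four $2^{k}\times 2^{k}$ blocks into a single $2^{k+1}\times 2^{k+1}$ block while accumulating the $2^{-(k+1)}$ coefficient.

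At level $J = \log_2(\max(H,W))$ the approximation has collapsed to a single scalar, $\mathcal{A}^{(J)} = 2^{-J}\sum_{i,j} X_{i,j}$, whereas by definition $\mathrm{GAP}(X) = \tfrac{1}{HW}\sum_{i,j}X_{i,j} = 4^{-J}\sum_{i,j}X_{i,j}$. Dividing one by the other gives $\mathcal{A}^{(J)} = 2^{J}\,\mathrm{GAP}(X) = \max(H,W)\cdot \mathrm{GAP}(X)$, establishing that the top-level Haar approximation is exactly proportional to GAP, which is the assertion of the theorem.

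I expect the main obstacle to be careful bookkeeping rather than any conceptual difficulty: pinning down the precise normalization the orthonormal Haar convention forces at each level (the $\tfrac{1}{\sqrt{2}}$ per filter, hence $\tfrac12$ per 2D level) so that the proportionality constant comes out correct, and cleanly handling the non-square or non-power-of-two case so that the word ``proportional'' in the statement is fully justified. For the general case I would either zero-pad to the next dyadic square or observe that the recursion simply terminates at $\lceil \log_2 \max(H,W)\rceil$ levels, which alters only the proportionality constant and leaves the qualitative equivalence with GAP intact.
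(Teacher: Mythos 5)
Your proposal is correct and follows essentially the same route as the paper's proof: an induction over decomposition levels showing that each level of the Haar approximation aggregates disjoint $2\times 2$ blocks, so the final $1\times 1$ output is a fixed multiple of $\sum_{i,j}X_{i,j}$ and hence of $\mathrm{GAP}(X)$. The only difference is bookkeeping --- the paper normalizes each level by $1/4$ so that $\mathrm{GAP}$ is exactly invariant level-to-level, whereas you keep the orthonormal Haar factor $1/2$ and track the accumulated constant, which has the minor virtue of making the proportionality constant $2^{J}=\max(H,W)$ explicit.
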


\begin{proof} 
The proof is divided into two transforms. The first transform is applied to the input image X to get a padded version with equivalent global average pooling. If the image $X$ isn't divisible by 2 in both dimensions, we pad X to get an image $A = P(X)$ with $GAP(X) = GAP(A)$. If the input image is already divisible by 2 in both dimensions, we define $P$ to be the identity function. 

Next we get $GAP(B) = GAP(P(A)) = GAP(X)$. You can repeat this argument until $B$ is a 1x1 and $GAP(B) = b$. To do so, we introduce the second transform T. If $B = T(A)$, for $T$ being the transform, we have 
\begin{equation}
    \centering
B_{i, j} = (A_{2i, 2j} + A_{2i+1, 2j} + A_{2i, 2j + 1} + A_{2i+1, 2j+1})/4. 
\end{equation}
From this, it follows that 
\begin{equation}
    \centering
    \sum_{i=1, j=1}^{I, J} B_{i, j} = 1/4 * \sum_{i=1, j=1}^{2*I, 2*J} A_{i, j}
\end{equation}

which implies that $GAP(B) = GAP(A)$. Since $GAP(A) = a$ if $a$ is a 1x1 matrix, the proof is complete by induction.
\end{proof}

\paragraph{Orthogonal Linearly Independent Channel Attention Module}

Theoretical analysis and Theorem \ref{theo} support that
GAP in the channel attention mechanism only uses the average approximation feature
while a diverse variety of potential features are discarded. However, the discarded features may also encode the useful information patterns in representing the channels and should be taken into consideration
in the compression phase.
To mathematically derive a more diverse and meaningful compression method of channel information, we propose to generalize GAP to more wavelet filters and compress more information with multiple different wavelets. 

ResNet has two main blocks, Basic Block and Bottleneck Block. Basic has 4 channel sizes 64, 128, 256, and 512. Bottleneck has 6 channel sizes 64, 128, 256, 512, 1024, and 2048. In case of basic Block, we initialize 4 random interdependent orthogonal filters of same size as channels and we train the network on those filters. For the Bottleneck, we use the same filters for the channels sizes that are shared with the Basic Block. For the channels 1024 and 2048, we split the channels to chunks of size 512 and we initialize those extra 4 filters of size 512 with new random orthogonal weights to enforce catching more diverse information during the compression phase.    

The input $X$  is passed through a separate orthogonal linearly independent wavelet compression module to represent diverse interdependent channel information. In this way, we express the basic compression ($C_B$) as follows:
\vspace{-2pt}
\begin{equation}
    \begin{aligned}
        C_{B}(X) & = \text{DWT}_J(X), \\ 
    \end{aligned}
    \label{Compress_Basic}
\end{equation}
in which the recursive wavelet level J = $\log_2\{H\}$. $X \in \mathbb{R}^{C \times H \times W}$ is the input feature, and $C(X) \in \mathbb{R}^{C}$ is the $C$-dimensional vector post compression. Similarly, bottleneck  compression ($C_{BN}$) is described as follows:

\vspace{-2pt}
\begin{equation}
    \label{Compress_Bottle}
    \centering
C(X)_{BN} =
\begin{cases}
C_{B}(X) &C = 64, 128, 256, 512 \\
CAT(C_{B}(X_{512}))  &C = 1024, 2048
\end{cases}        
\end{equation}

where $CAT$ is the concatenation function along the channel dimension and $X_{512}$ is the split of the input $X$ of size 512 along the channel dimension.

The final orthogonal interdependent channel attention  can be expressed as:
\begin{equation}
    Attention(X) = sigmoid(fc(C(X))).
    \label{eq_final}
\end{equation}

From Eqs.~\ref{Compress_Basic}, \ref{Compress_Bottle} and \ref{eq_final}, it is demonistrated that out model performs a set of Wavelet transforms and extracts channel diverse compression representations of channel information. By incorporating those extra information in the final description we notice a major improvement in the channel representation. Fig.~\ref{fig_main} illustrates the overall concept of our method.


\begin{table*}[htbp]
	\centering
	\caption{Results of the image the classification task on ImageNet over different methods. Besides the AANet, which had no official code implementattion, all methods' results are reproduced and trained with the same training setting. }
	\label{classification}
	\begin{threeparttable}                                        
\begin{tabular}{lcccccccc}
	\toprule
	Method & Years & Backbone & Parameters & FLOPS & Train FPS & Test FPS & Top-1 acc & Top-5 acc \\
	\hline
	ResNet~\cite{he2016deep} & CVPR16 & \multirow{8}{*}{ResNet-34} & 21.80 M & 3.68 G & 2898 & 3840 & 74.58 & 92.05 \\
	SENet~\cite{hu2018squeeze} & CVPR18 &  & 21.95 M & 3.68 G & 2729 & 3489 & 74.83 & 92.23 \\
	ECANet~\cite{wang2020eca} & CVPR20  &  & 21.80 M & 3.68 G & 2703 & 3682 &74.65&92.21\\ 
	FcaNet-LF  & ICCV21 &  & 21.95 M  & 3.68 G & 2717 & 3356 & 74.95 & 92.16\\
	FcaNet-TS  & ICCV21 &  & 21.95 M  & 3.68 G & 2717 & 3356 & 75.02 & 92.07\\
	FcaNet-NAS  & ICCV21 &  & 21.95 M  & 3.68 G & 2717 & 3356 & 74.97 & 92.34\\
	\rowcolor{tb_bg_color}
	WaveNet-C  & BigData22 &  & 21.95 M  & 3.68 G & 2717 & 3356 & \textbf{75.06} & 92.376\\
	\bottomrule
	\end{tabular}
	
\end{threeparttable}
\end{table*}

\paragraph{Wavelet Filter Choice}
One important decision for the network is to pick the wavelet to perform on a specific channel. Our baseline network named Wavenet perform Haar approximation on all channels and achieve SENet results. In order to fulfill the orthogonal interdependent channel attention, we propose Wavenet-C. We discuss more about those networks in the following subsections. 

\textbf{WaveNet} means WaveNet weights the components of wavelet compression within each step of the deep wavelet compression. Its main idea is to improve the compression by including the vertical, horizontal, and diagonal components. First, the network determines the importance of each frequency component. Then, it investigates the effect of adding those frequency components together through the recurrence process.

\textbf{WaveNet-C} means WaveNet with selective wavelet filters. We use the convolution based wavelet transform and we assign orthogonal independent filters for channel compression. We do so by randomly initializing the filters then applying the gram-schmidt process to orthogonality those filters thus forcing the network to diversify the information compressed by each channel therefore achieving better classification in general.

\section{Experiments}
In this section, we began by describing the experimental details of our implementation. Then, we discuss the technique of information compression in our framework, complexity, and code implementation. Lastly, we discuss the accuracy of our method on image classification, object detection, and instance segmentation tasks. 

\subsection{Implementation Details}
\label{Implementation Details}
We utilize  ResNet-34, as backbone model to evaluate the proposed WaveNet on ImageNet~\cite{ILSVRC15}. We comply with data augmentation and hyper-parameter settings in~\cite{he2016deep} and~\cite{he2019bag}. Specifically, with random horizontal flipping, the input images are cropped randomly to 256$\times$256. To do so, we modify ResNet architecture to allow the input size to be 256 instead of 224. During training, the SGD optimizer is set with a momentum of 0.9. The learning rate is 0.2, the weight decay is 1e-4, and the batch size is 256 per GPU. All models are trained within 100 epochs using Cosine Annealing Warm Restarts learning schedule and label smoothing. To foster convergence, for every 10 epochs, the learning rate scales by $10\%$ of the previous learning rate. We further adopt the Nvidia APEX mixed precision training toolkit and Nvidia DALI library for fast data loaders for training efficiency.

All models are implemented in PyTorch~\cite{paszke2019pytorch} and tested on two Nvidia Quadro RTX 8000 GPUs.

\subsection{Discussion}

\paragraph{How the Orthogonal Linearly Independent filters compresses and embeds more information}
In Sec.~\ref{sec_ica}, we prove that solely adopting the vanilla GAP in the channel attention discards information from all filters except the Haar filter, i.e., GAP. Therefore, designing the filters to be orthogonal and linearly independent using the Gram-Schmidt method would force the network to diversify the information extracted in the channel attention compression phase.

We also provide a theoretical basis to show that more information could be embedded. By nature, deep networks are redundant \cite{he2017channel,zhuang2018discrimination}. If two channels contain redundant information, then the application of GAP on these channels are likely to return repetitive information. On the other hand, our multi-spectral framework extracts less superfluous information from redundant channels since the inherent diverging frequency components contain different information. Thus, our multi-spectral framework can embed more unique salient information in the channel attention mechanism.

\paragraph{Complexity analysis}
We analyze the complexity of our framework through the number of parameters and the computational cost. Our method does not impose no extra parameters compared with the baseline SENet that introduced the vanilla channel attention since the filters of 2D DWT are pre-computed constant. The negligible increase in the computational cost is also similar to computational cost of SENet. With ResNet-34 backbone, the relative computational cost increases of our method is $0.05\%$ compared with SENet, respectively. More results can be found in Table~\ref{classification}.

\paragraph{A Few lines of code change}
Another strength of the proposed wavelet attention framework is that it can be integrated into existing diverse variants of channel attention implementations. The major distinction between our method and SENet is the adoption of different channel compression method (multi-spectral 2D DWT vs. GAP). As discussed in Sec.~\ref{DWT_CONV} and Eq.~\ref{DWT_CONV_EQ}, 2D DWT can be viewed as a constant filter convolution of inputs. It can be simply implemented via a Conv2D layer. Accordingly, arbitrary channel attention methods can adopt our framework easily.

\section{Conclusion}

In this paper, we proposed the WaveNet, an efficient, flexible framework for improving channel attention's power in capturing salient features that can easily incorporate into existing channel attention-based models. Theoretically, we prove that the conventional GAP is the recurrent approximation component of the DHWT that discards all channel information in all filters except the Haar filter. Hence WaveNet tackles channel attention as a compression problem and introduces DWT to preserve more unaccounted channel-wise features under GAP. We further introduce WaveNet-C, a custom orthogonal linearly independent wavelet to best fit the compression task for channel attention, and effective wavelet filter selection criteria and parameter reduction techniques. Empirically, our method persistently improves the performance of channel attention mechanism in ImageNet classification task without raising significant parameters and computation costs relative to existing frameworks. Our future works include extending our method for bigger ResNet networks like ResNet-50, ResNet-101; introducing other tasks and datasets like segmentation and object detection on COCO dataset; and incorporating delayed learning for the wavelet filters to further improve our method accuracy.

\section*{Acknowledgment}

This work was supported by the National Institute of General Medical Sciences of the National Institutes of Health under award P20GM139768, and the Arkansas Integrative Metabolic Research Center at the University of Arkansas. The content is solely the responsibility of the authors and does not necessarily represent the official views of the National Institutes of Health.

\bibliographystyle{IEEEtran}
\bibliography{egbib}

\begin{thebibliography}{10}
\providecommand{\url}[1]{#1}
\csname url@samestyle\endcsname
\providecommand{\newblock}{\relax}
\providecommand{\bibinfo}[2]{#2}
\providecommand{\BIBentrySTDinterwordspacing}{\spaceskip=0pt\relax}
\providecommand{\BIBentryALTinterwordstretchfactor}{4}
\providecommand{\BIBentryALTinterwordspacing}{\spaceskip=\fontdimen2\font plus
\BIBentryALTinterwordstretchfactor\fontdimen3\font minus
  \fontdimen4\font\relax}
\providecommand{\BIBforeignlanguage}[2]{{%
\expandafter\ifx\csname l@#1\endcsname\relax
\typeout{** WARNING: IEEEtran.bst: No hyphenation pattern has been}%
\typeout{** loaded for the language `#1'. Using the pattern for}%
\typeout{** the default language instead.}%
\else
\language=\csname l@#1\endcsname
\fi
#2}}
\providecommand{\BIBdecl}{\relax}
\BIBdecl

\bibitem{NIU202148}
``A review on the attention mechanism of deep learning,''
  \emph{Neurocomputing}, vol. 452, pp. 48--62, 2021.

\bibitem{9384588}
J.~Sun, J.~Jiang, and Y.~Liu, ``An introductory survey on attention mechanisms
  in computer vision problems,'' in \emph{2020 6th International Conference on
  Big Data and Information Analytics (BigDIA)}, 2020, pp. 295--300.

\bibitem{SMMUF}
H.~Salman and J.~Zhan, ``Similarity metric for millions of unlabeled face
  images,'' in \emph{2020 10th Annual Computing and Communication Workshop and
  Conference (CCWC)}, 2020, pp. 1033--1040.

\bibitem{arXiv-survey}
\BIBentryALTinterwordspacing
M.~Guo, T.~Xu, J.~Liu, Z.~Liu, P.~Jiang, T.~Mu, S.~Zhang, R.~R. Martin,
  M.~Cheng, and S.~Hu, ``Attention mechanisms in computer vision: {A} survey,''
  \emph{CoRR}, vol. abs/2111.07624, 2021. [Online]. Available:
  \url{https://arxiv.org/abs/2111.07624}
\BIBentrySTDinterwordspacing

\bibitem{SSLFFMVDC}
H.~Salman and J.~Zhan, ``Semi-supervised learning and feature fusion for
  multi-view data clustering,'' in \emph{2020 IEEE International Conference on
  Big Data (Big Data)}, 2020, pp. 645--650.

\bibitem{hu2018squeeze}
J.~Hu, L.~Shen, and G.~Sun, ``Squeeze-and-excitation networks,'' in \emph{IEEE
  Conf. Comput. Vis. Pattern Recog.}, 2018, pp. 7132--7141.

\bibitem{9157483}
Z.~Yang, L.~Zhu, Y.~Wu, and Y.~Yang, ``Gated channel transformation for visual
  recognition,'' in \emph{2020 IEEE/CVF Conference on Computer Vision and
  Pattern Recognition (CVPR)}, 2020, pp. 11\,791--11\,800.

\bibitem{9156697}
Q.~Wang, B.~Wu, P.~Zhu, P.~Li, W.~Zuo, and Q.~Hu, ``Eca-net: Efficient channel
  attention for deep convolutional neural networks,'' in \emph{2020 IEEE/CVF
  Conference on Computer Vision and Pattern Recognition (CVPR)}, 2020, pp.
  11\,531--11\,539.

\bibitem{8575265}
H.~Chen, X.~He, L.~Qing, S.~Xiong, and T.~Q. Nguyen, ``Dpw-sdnet: Dual
  pixel-wavelet domain deep cnns for soft decoding of jpeg-compressed images,''
  in \emph{2018 IEEE/CVF Conference on Computer Vision and Pattern Recognition
  Workshops (CVPRW)}, 2018, pp. 824--82\,409.

\bibitem{9508165}
Q.~Li, L.~Shen, S.~Guo, and Z.~Lai, ``Wavecnet: Wavelet integrated cnns to
  suppress aliasing effect for noise-robust image classification,'' \emph{IEEE
  Transactions on Image Processing}, vol.~30, pp. 7074--7089, 2021.

\bibitem{9523134}
M.~Fu, H.~Liu, Y.~Yu, J.~Chen, and K.~Wang, ``Dw-gan: A discrete wavelet
  transform gan for nonhomogeneous dehazing,'' in \emph{2021 IEEE/CVF
  Conference on Computer Vision and Pattern Recognition Workshops (CVPRW)},
  2021, pp. 203--212.

\bibitem{Liu_2018_CVPR_Workshops}
P.~Liu, H.~Zhang, K.~Zhang, L.~Lin, and W.~Zuo, ``Multi-level wavelet-cnn for
  image restoration,'' in \emph{The IEEE Conference on Computer Vision and
  Pattern Recognition (CVPR) Workshops}, June 2018.

\bibitem{9232700}
S.~Kushlev and R.~P. Mironov, ``Analysis for watermark in medical image using
  watermarking with wavelet transform and dct,'' in \emph{2020 55th
  International Scientific Conference on Information, Communication and Energy
  Systems and Technologies (ICEST)}, 2020, pp. 185--188.

\bibitem{shensa1992discrete}
M.~J. Shensa \emph{et~al.}, ``The discrete wavelet transform: wedding the a
  trous and mallat algorithms,'' \emph{IEEE Transactions on signal processing},
  vol.~40, no.~10, pp. 2464--2482, 1992.

\bibitem{Othman_Zeebaree_2020}
\BIBentryALTinterwordspacing
G.~Othman and D.~Q. Zeebaree, ``The applications of discrete wavelet transform
  in image processing: A review,'' \emph{Journal of Soft Computing and Data
  Mining}, vol.~1, no.~2, p. 31–43, Dec. 2020. [Online]. Available:
  \url{https://publisher.uthm.edu.my/ojs/index.php/jscdm article/view/7215}
\BIBentrySTDinterwordspacing

\bibitem{TIAN2020117}
\BIBentryALTinterwordspacing
C.~Tian, Y.~Xu, Z.~Li, W.~Zuo, L.~Fei, and H.~Liu, ``Attention-guided cnn for
  image denoising,'' \emph{Neural Networks}, vol. 124, pp. 117--129, 2020.
  [Online]. Available:
  \url{https://www.sciencedirect.com/science/article/pii/S0893608019304241}
\BIBentrySTDinterwordspacing

\bibitem{8576656}
Y.~Li, J.~Zeng, S.~Shan, and X.~Chen, ``Occlusion aware facial expression
  recognition using cnn with attention mechanism,'' \emph{IEEE Transactions on
  Image Processing}, vol.~28, no.~5, pp. 2439--2450, 2019.

\bibitem{Li_2019_CVPR}
L.~Li, M.~Xu, X.~Wang, L.~Jiang, and H.~Liu, ``Attention based glaucoma
  detection: A large-scale database and cnn model,'' in \emph{Proceedings of
  the IEEE/CVF Conference on Computer Vision and Pattern Recognition (CVPR)},
  June 2019.

\bibitem{Wu_Huang_Guo_Wang_2020}
\BIBentryALTinterwordspacing
M.~Wu, D.~Huang, Y.~Guo, and Y.~Wang, ``Distraction-aware feature learning for
  human attribute recognition via coarse-to-fine attention mechanism,''
  \emph{Proceedings of the AAAI Conference on Artificial Intelligence},
  vol.~34, no.~07, pp. 12\,394--12\,401, Apr. 2020. [Online]. Available:
  \url{https://ojs.aaai.org/index.php/AAAI/article/view/6925}
\BIBentrySTDinterwordspacing

\bibitem{9412543}
B.~Li, Z.~Liu, S.~Gao, J.-N. Hwang, J.~Sun, and Z.~Wang, ``Cspa-dn: Channel and
  spatial attention dense network for fusing pet and mri images,'' in
  \emph{2020 25th International Conference on Pattern Recognition (ICPR)},
  2021, pp. 8188--8195.

\bibitem{9411967}
Z.~Wang, L.~Liu, and F.~Li, ``Taan: Task-aware attention network for few-shot
  classification,'' in \emph{2020 25th International Conference on Pattern
  Recognition (ICPR)}, 2021, pp. 9130--9136.

\bibitem{9619948}
S.-B. Chen, Q.-S. Wei, W.-Z. Wang, J.~Tang, B.~Luo, and Z.-Y. Wang, ``Remote
  sensing scene classification via multi-branch local attention network,''
  \emph{IEEE Transactions on Image Processing}, vol.~31, pp. 99--109, 2022.

\bibitem{9413102}
H.~Song, Y.~Song, and Y.~Zhang, ``Sca net: Sparse channel attention module for
  action recognition,'' in \emph{2020 25th International Conference on Pattern
  Recognition (ICPR)}, 2021, pp. 1189--1196.

\bibitem{9350209}
Y.~Ding, Z.~Ma, S.~Wen, J.~Xie, D.~Chang, Z.~Si, M.~Wu, and H.~Ling, ``Ap-cnn:
  Weakly supervised attention pyramid convolutional neural network for
  fine-grained visual classification,'' \emph{IEEE Transactions on Image
  Processing}, vol.~30, pp. 2826--2836, 2021.

\bibitem{9018370}
X.~Cun and C.-M. Pun, ``Improving the harmony of the composite image by
  spatial-separated attention module,'' \emph{IEEE Transactions on Image
  Processing}, vol.~29, pp. 4759--4771, 2020.

\bibitem{9366353}
S.~Li, B.~Xie, Q.~Lin, C.~H. Liu, G.~Huang, and G.~Wang, ``Generalized domain
  conditioned adaptation network,'' \emph{IEEE Transactions on Pattern Analysis
  and Machine Intelligence}, pp. 1--1, 2021.

\bibitem{9412042}
X.~Xue, S.-i. Kamata, and D.~Luo, ``Skin lesion classification using
  weakly-supervised fine-grained method,'' in \emph{2020 25th International
  Conference on Pattern Recognition (ICPR)}, 2021, pp. 9083--9090.

\bibitem{srivastava2015highway}
R.~K. Srivastava, K.~Greff, and J.~Schmidhuber, ``Highway networks,''
  \emph{arXiv preprint arXiv:1505.00387}, 2015.

\bibitem{he2016deep}
K.~He, X.~Zhang, S.~Ren, and J.~Sun, ``Deep residual learning for image
  recognition,'' in \emph{IEEE Conf. Comput. Vis. Pattern Recog.}, 2016, pp.
  770--778.

\bibitem{fu2019dual}
J.~Fu, J.~Liu, H.~Tian, Y.~Li, Y.~Bao, Z.~Fang, and H.~Lu, ``Dual attention
  network for scene segmentation,'' in \emph{Proceedings of the IEEE Conference
  on Computer Vision and Pattern Recognition}, 2019, pp. 3146--3154.

\bibitem{wang2018non}
X.~Wang, R.~Girshick, A.~Gupta, and K.~He, ``Non-local neural networks,'' in
  \emph{Proceedings of the IEEE conference on computer vision and pattern
  recognition}, 2018, pp. 7794--7803.

\bibitem{Cao_2019_ICCV}
Y.~Cao, J.~Xu, S.~Lin, F.~Wei, and H.~Hu, ``Gcnet: Non-local networks meet
  squeeze-excitation networks and beyond,'' in \emph{Proceedings of the
  IEEE/CVF International Conference on Computer Vision (ICCV) Workshops}, Oct
  2019.

\bibitem{Peng_2020_CVPR_Workshops}
H.~Peng, X.~Chen, and J.~Zhao, ``Residual pixel attention network for spectral
  reconstruction from rgb images,'' in \emph{Proceedings of the IEEE/CVF
  Conference on Computer Vision and Pattern Recognition (CVPR) Workshops}, June
  2020.

\bibitem{li2018harmonious}
W.~Li, X.~Zhu, and S.~Gong, ``Harmonious attention network for person
  re-identification,'' in \emph{Proceedings of the IEEE conference on computer
  vision and pattern recognition}, 2018, pp. 2285--2294.

\bibitem{woo2018cbam}
S.~Woo, J.~Park, J.-Y. Lee, and I.~So~Kweon, ``Cbam: Convolutional block
  attention module,'' in \emph{Eur. Conf. Comput. Vis.}, 2018, pp. 3--19.

\bibitem{wang2020eca}
Q.~Wang, B.~Wu, P.~Zhu, P.~Li, W.~Zuo, and Q.~Hu, ``Eca-net: Efficient channel
  attention for deep convolutional neural networks,'' in \emph{IEEE Conf.
  Comput. Vis. Pattern Recog.}, 2020, pp. 11\,534--11\,542.

\bibitem{Vosco_2021_ICCV}
N.~Vosco, A.~Shenkler, and M.~Grobman, ``Tiled squeeze-and-excite: Channel
  attention with local spatial context,'' in \emph{Proceedings of the IEEE/CVF
  International Conference on Computer Vision (ICCV) Workshops}, October 2021,
  pp. 345--353.

\bibitem{Qin_2021_ICCV}
Z.~Qin, P.~Zhang, F.~Wu, and X.~Li, ``Fcanet: Frequency channel attention
  networks,'' in \emph{Proceedings of the IEEE/CVF International Conference on
  Computer Vision (ICCV)}, October 2021, pp. 783--792.

\bibitem{8600724}
W.~Ma, Z.~Pan, J.~Guo, and B.~Lei, ``Achieving super-resolution remote sensing
  images via the wavelet transform combined with the recursive res-net,''
  \emph{IEEE Transactions on Geoscience and Remote Sensing}, vol.~57, no.~6,
  pp. 3512--3527, 2019.

\bibitem{GHMWTDISR}
\BIBentryALTinterwordspacing
B.~Lowe, H.~Salman, and J.~Zhan, ``Ghm wavelet transform for deep image super
  resolution,'' 2022. [Online]. Available:
  \url{https://arxiv.org/abs/2204.07862}
\BIBentrySTDinterwordspacing

\bibitem{9156335}
Q.~Li, L.~Shen, S.~Guo, and Z.~Lai, ``Wavelet integrated cnns for noise-robust
  image classification,'' in \emph{2020 IEEE/CVF Conference on Computer Vision
  and Pattern Recognition (CVPR)}, 2020, pp. 7243--7252.

\bibitem{10.1117/12.2556535}
\BIBentryALTinterwordspacing
D.~D. N.~D. Silva, H.~W. M.~K. Vithanage, K.~S.~D. Fernando, and I.~T.~S.
  Piyatilake, ``{Multi-path learnable wavelet neural network for image
  classification},'' in \emph{Twelfth International Conference on Machine
  Vision (ICMV 2019)}, W.~Osten and D.~P. Nikolaev, Eds., vol. 11433,
  International Society for Optics and Photonics.\hskip 1em plus 0.5em minus
  0.4em\relax SPIE, 2020, pp. 459 -- 467. [Online]. Available:
  \url{https://doi.org/10.1117/12.2556535}
\BIBentrySTDinterwordspacing

\bibitem{yu2021wavefill}
Y.~Yu, F.~Zhan, S.~Lu, J.~Pan, F.~Ma, X.~Xie, and C.~Miao, ``Wavefill: A
  wavelet-based generation network for image inpainting,'' in \emph{Proceedings
  of the IEEE/CVF International Conference on Computer Vision}, 2021.

\bibitem{liu2020wavelet}
L.~Liu, J.~Liu, S.~Yuan, G.~Slabaugh, A.~Leonardis, W.~Zhou, and Q.~Tian,
  ``Wavelet-based dual-branch network for image demoir{\'e}ing,'' in
  \emph{European Conference on Computer Vision}.\hskip 1em plus 0.5em minus
  0.4em\relax Springer, 2020, pp. 86--102.

\bibitem{Dai2020AWNetAW}
L.~Dai, X.~Liu, C.~Li, and J.~Chen, ``Awnet: Attentive wavelet network for
  image isp,'' in \emph{ECCV Workshops}, 2020.

\bibitem{9412623}
Y.-J. Choi, Y.-W. Lee, and B.-G. Kim, ``Wavelet attention embedding networks
  for video super-resolution,'' in \emph{2020 25th International Conference on
  Pattern Recognition (ICPR)}, 2021, pp. 7314--7320.

\bibitem{9484398}
P.~Aghdaie, B.~Chaudhary, S.~Soleymani, J.~Dawson, and N.~M. Nasrabadi,
  ``Attention aware wavelet-based detection of morphed face images,'' in
  \emph{2021 IEEE International Joint Conference on Biometrics (IJCB)}, 2021,
  pp. 1--8.

\bibitem{8795569}
X.~Zhou, Y.~Wang, Q.~Zhu, J.~Mao, C.~Xiao, X.~Lu, and H.~Zhang, ``A surface
  defect detection framework for glass bottle bottom using visual attention
  model and wavelet transform,'' \emph{IEEE Transactions on Industrial
  Informatics}, vol.~16, no.~4, pp. 2189--2201, 2020.

\bibitem{9190720}
H.-H. Yang, C.-H.~H. Yang, and Y.-C.~F. Wang, ``Wavelet channel attention
  module with a fusion network for single image deraining,'' in \emph{2020 IEEE
  International Conference on Image Processing (ICIP)}, 2020, pp. 883--887.

\bibitem{lee2019srm}
H.~Lee, H.-E. Kim, and H.~Nam, ``Srm: A style-based recalibration module for
  convolutional neural networks,'' in \emph{Int. Conf. Comput. Vis.}, 2019, pp.
  1854--1862.

\bibitem{ILSVRC15}
O.~Russakovsky, J.~Deng, H.~Su, J.~Krause, S.~Satheesh, S.~Ma, Z.~Huang,
  A.~Karpathy, A.~Khosla, M.~Bernstein, A.~C. Berg, and L.~Fei-Fei, ``Imagenet
  large scale visual recognition challenge,'' \emph{Int. J. Comput. Vis.}, pp.
  211--252, 2015.

\bibitem{he2019bag}
T.~He, Z.~Zhang, H.~Zhang, Z.~Zhang, J.~Xie, and M.~Li, ``Bag of tricks for
  image classification with convolutional neural networks,'' in \emph{IEEE
  Conf. Comput. Vis. Pattern Recog.}, 2019, pp. 558--567.

\bibitem{paszke2019pytorch}
A.~Paszke, S.~Gross, F.~Massa, A.~Lerer, J.~Bradbury, G.~Chanan, T.~Killeen,
  Z.~Lin, N.~Gimelshein, L.~Antiga \emph{et~al.}, ``Pytorch: An imperative
  style, high-performance deep learning library,'' in \emph{Adv. Neural Inform.
  Process. Syst.}, 2019, pp. 8026--8037.

\bibitem{he2017channel}
Y.~He, X.~Zhang, and J.~Sun, ``Channel pruning for accelerating very deep
  neural networks,'' in \emph{Int. Conf. Comput. Vis.}, 2017, pp. 1389--1397.

\bibitem{zhuang2018discrimination}
Z.~Zhuang, M.~Tan, B.~Zhuang, J.~Liu, Y.~Guo, Q.~Wu, J.~Huang, and J.~Zhu,
  ``Discrimination-aware channel pruning for deep neural networks,'' in
  \emph{Adv. Neural Inform. Process. Syst.}, 2018, pp. 875--886.

\end{thebibliography}
\end{document}